\documentclass{article}
\usepackage[utf8]{inputenc}
\usepackage[tbtags]{amsmath}
\usepackage{amsfonts,amssymb,mathrsfs,amscd,comment,amsthm}
\usepackage{natbib}

\usepackage{amsfonts}       
\usepackage{nicefrac}       
\usepackage{microtype}      
\usepackage{comment}
\usepackage{amssymb, amsmath, latexsym}

\usepackage{nicefrac}       

\overfullrule10pt
\voffset-30mm\hoffset-15mm\mag1200
\textheight 230mm\textwidth 140mm\normalbaselineskip=12.5pt 

\newtheorem{theorem}{Theorem}

\setlength{\bibsep}{2.5pt plus 1ex}
\begin{document}
\title{\bf\large On the Inverse Flow Matching Problem\\in the One-Dimensional and Gaussian Cases}
\author{Korotin A.\footnote{Applied AI Institute, iamalexkorotin@gmail.com}, Pammer G.\footnote{Graz University of Technology, gudmund.pammer@tugraz.at}}
\date{}

\maketitle

\vspace{-6mm}\begin{abstract}
    This paper studies the inverse problem of flow matching (FM) between distributions with finite exponential moment, a problem motivated by modern generative AI applications such as the distillation of flow matching models. Uniqueness of the solution is established in two cases – the one-dimensional setting and the Gaussian case. The general multidimensional problem remains open for future studies.
\end{abstract}

\vspace{2mm}We study the inverse flow matching problem (FM~\cite{liu2023flow}) between distributions $p_0, p_1 \in \mathcal{P}_{\rm exp}(\mathbb{R}^D)$ on $\mathbb{R}^D$ with a finite exponential moment, i.e., such that
$$\exists \lambda>0: \int \exp(\lambda \|x\|_2)d[p_0\!+\!p_1](x)<\infty.$$
For this problem, we establish the uniqueness of its solution in two cases: the one-dimensional ($D=1$) and the Gaussian.

We start by describing the \underline{\textbf{forward problem}} of FM \cite{liu2023flow}. Let $\pi\in \Pi(p_0,p_1)$ be a transport plan between $p_0$ and $p_1$, i.e., a joint distribution on $\mathbb{R}^{D}\times\mathbb{R}^{D}$ whose marginals are $p_0$ and $p_1$, respectively. Let $(X_0,X_1)\sim \pi$ be some random variable taking values in $\mathbb{R}^{D}\times\mathbb{R}^{D}$ with distribution $\text{Law}\big((X_0,X_1)\big)=\pi$. For each $t\in(0,1)$, define the random variable 
$$X_{t}=(1-t)X_0+tX_1$$ and set $p_t^{\pi}=\text{Law}(X_t)$, i.e., $p_t^{\pi}$ is the distribution of $X_{t}$. The general goal of the flow matching problem is to find a velocity field $v:[0,1]\times\mathbb{R}^{D}\rightarrow \mathbb{R}^{D}$ such that transporting the mass of $p_0$ along it generates the sequence $\{p_{t}^{\pi}\}$, i.e., for $t\in [0,1]$
$$\frac{\partial p_t^{\pi}(x_t)}{\partial t}=-\text{div}\big(p_t^{\pi}(x_t)v_t(x_t)\big).$$

The idea of FM underlies state-of-the-art generative AI methods \cite{esser2024scaling}. In general, the problem may not have a unique solution. Therefore, one typically seeks one \textit{specific solution}
$$v^{\pi}_t(x_{t}):=\mathbb{E}\big[X_1-X_0|X_t=x_{t}\big],$$ which is relatively easy to find numerically in practice by solving a regression problem \cite[Formula 1]{liu2023flow}, and it satisfies the required properties above \cite[Section 2.2]{liu2023flow}. Thus, the forward FM problem is associated precisely with finding this velocity field $v^{\pi}$ given the plan $\pi$.

Recently, the \underline{\textbf{inverse problem}} of FM has begun to attract interest. Given $p_0,p_1$, as well as $\{p_t^{\pi}\}_{t\in (0,1)}$ and $v^{\pi}$ obtained via FM with some plan $\pi\in \Pi(p_0,p_1)$, the task is to find the plan $\pi$ itself. Readers interested in the practical motivation for solving the inverse problem are referred to \cite{huang2024flow}, where numerically efficient generative models based on FM are developed by solving the inverse problem. Note that in practice, only $p_0$ and $v^{\pi}$ are considered as input data, since the remaining $\{p_{t}^{\pi}\}_{t\in (0,1]}$ are reconstructed using the continuity equation.

It is still theoretically unknown whether the solution $\pi$ to the inverse problem is unique. Before studying this question, let us note one nuance. The velocity field $v^{\pi}_t$ should be considered only up to values outside the support of the distribution $p_t^{\pi}$, as they do not play a role in the continuity equation. Consider the case $D=1$.

\begin{theorem}[On the Uniqueness of the Solution to the Inverse FM Problem in the One-Dimensional Case] Consider distributions $p_0,p_1\in\mathcal{P}_{\rm exp}(\mathbb{R}^{D})$. Let $\pi,\pi'\!\in\!\Pi(p_0,p_1)$ be two transport plans between $p_0,p_1$. If for all $t\in[0,1]$ $p_{t}^{\pi}\!=\!p_{t}^{\pi'}$, then $\pi\!=\!\pi'$.
\label{theorem-1}
\end{theorem}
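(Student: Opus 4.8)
\emph{Proof proposal.} The plan is to pass to (two-dimensional) characteristic functions and exploit the analyticity forced by the exponential-moment hypothesis. Since here $D=1$, write $\Phi_\pi(\xi_0,\xi_1)\eqdef\int_{\mathbb{R}^2}e^{i(\xi_0 x_0+\xi_1 x_1)}\,d\pi(x_0,x_1)$ for the characteristic function of $\pi$, and similarly $\Phi_{\pi'}$. As a probability measure on $\mathbb{R}^2$ is determined by its characteristic function, it suffices to prove $\Phi_\pi\equiv\Phi_{\pi'}$ on $\mathbb{R}^2$. The first step is the elementary observation that for every $t\in[0,1]$ and $s\in\mathbb{R}$ the characteristic function of $p_t^\pi$ at frequency $s$ equals $\int e^{is((1-t)x_0+tx_1)}\,d\pi(x_0,x_1)=\Phi_\pi\big((1-t)s,\,ts\big)$. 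Hence $p_t^\pi=p_t^{\pi'}$ for all $t$ yields $\Phi_\pi=\Phi_{\pi'}$ on $\{((1-t)s,ts):t\in[0,1],\,s\in\mathbb{R}\}$, which is precisely the union of the closed first and third quadrants $\{(\xi_0,\xi_1):\xi_0\xi_1\ge 0\}$; in particular the two functions agree on the nonempty open first quadrant. Note that they need not, a priori, agree elsewhere, so some analytic continuation is genuinely needed — this is exactly where $p_0,p_1\in\mathcal{P}_{\rm exp}$ is used.

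Next I would record the analyticity. Because $p_0,p_1\in\mathcal{P}_{\rm exp}(\mathbb{R})$, there is $\mu>0$ with $\int e^{\mu|x_0|}\,dp_0+\int e^{\mu|x_1|}\,dp_1<\infty$; since the marginals of $\pi$ are $p_0$ and $p_1$, the bound $e^{a|x_0|+b|x_1|}\le\tfrac12\big(e^{2a|x_0|}+e^{2b|x_1|}\big)$ gives $\int_{\mathbb{R}^2}e^{a|x_0|+b|x_1|}\,d\pi<\infty$ for all $0\le a,b<\mu/2$. Consequently the integral defining $\Phi_\pi$ converges absolutely for complex arguments in the tube $T\eqdef\{(\zeta_0,\zeta_1)\in\mathbb{C}^2:|\mathrm{Im}\,\zeta_0|<\mu/2,\ |\mathrm{Im}\,\zeta_1|<\mu/2\}$ and defines there a function that is holomorphic in each variable separately (standard: differentiate under the integral sign by dominated convergence, or apply Morera's theorem together with Fubini). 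The same holds for $\Phi_{\pi'}$.

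The conclusion then follows by slicing and the one-dimensional identity theorem. Fix a real $\xi_1>0$: the holomorphic function $\zeta_0\mapsto\Phi_\pi(\zeta_0,\xi_1)-\Phi_{\pi'}(\zeta_0,\xi_1)$ on the connected strip $\{|\mathrm{Im}\,\zeta_0|<\mu/2\}$ vanishes on $[0,\infty)$, which has an accumulation point in that strip, hence vanishes identically; thus $\Phi_\pi(\xi_0,\xi_1)=\Phi_{\pi'}(\xi_0,\xi_1)$ for all real $\xi_0$ and all $\xi_1>0$. Fixing now an arbitrary real $\xi_0$ and repeating the argument in the variable $\zeta_1$ (the two functions agree on $(0,\infty)$), we obtain $\Phi_\pi=\Phi_{\pi'}$ on all of $\mathbb{R}^2$, whence $\pi=\pi'$.

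I expect the main obstacle to be conceptual rather than computational: one must recognize that the interpolation family $\{p_t^\pi\}$ pins down $\Phi_\pi$ only on a pair of opposite quadrants, so the statement is false without regularity assumptions, and the exponential moment is the precise hypothesis that lets one recover $\pi$ by analytic continuation. The remaining points — identifying the image of $(t,s)\mapsto((1-t)s,ts)$, and justifying holomorphy on the tube $T$ — are routine.
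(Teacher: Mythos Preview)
Your proof is correct and follows essentially the same route as the paper's: both identify the two-dimensional characteristic function of $\pi$ on the first quadrant from the family $\{p_t^\pi\}$ and then use the exponential-moment hypothesis to analytically continue to all of $\mathbb{R}^2$. The only cosmetic difference is that the paper invokes a multivariate analytic-continuation result directly, whereas you carry out the extension by an explicit one-variable slicing argument on a complex tube; your version is in fact more self-contained.
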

\begin{proof}[Proof] Consider $$\phi_{\pi}(\xi_0,\xi_1)\!:=\!\mathbb{E}e^{i \xi_0 X_0+i \xi_1 X_1 },$$ which is the characteristic function of the distribution $\pi$. We will show that it is uniquely determined by the sequence $p_{t}^{\pi}$, $t\in [0,1]$. Indeed, knowing $p_{t}^{\pi}=\text{Law}(X_t)$ we uniquely recover 
$$\mathbb{E}e^{i\xi_{t}X_{t}}=\mathbb{E}e^{i\xi_{t}\!\big[\!(1-t)X_0+tX_1\!\big]}$$ for any $t\in [0,1]$ and $\xi_{t}\in \mathbb{R}$. By varying $t$ and $\xi_{t}$ within the admissible range, we can obtain all expressions of the form $\mathbb{E}e^{i \xi_0 X_0+i \xi_1 X_1 }$ with $\xi_0,\xi_1\in\mathbb{R}_{+}$, i.e., recover $\phi_{\pi}(\xi_0,\xi_1)$ on $\mathbb{R}_{+}\!\!\times\!\mathbb{R}_{+}$. Since $p_0,p_1$ have an exponential moment, $\phi_{\pi}$ is analytic in a neighborhood of $(0,0)$ and extends uniquely to $\mathbb{R}^{2}$ \citep[Corollary 2.3.8]{krantz2001function}. Thus, from the sequence $\{p_t^{\pi}\}$ we uniquely recover $\phi_{\pi}$ and $\pi$. Since $p_{t}^{\pi}=p_{t}^{\pi'}$ for all $t$, we conclude $\pi=\pi'$, as their characteristic functions are determined identically by the sequence $\{p_t^{\pi}\}$.
\end{proof}
\vspace{-1mm}In the proof, we relied on information about the sequence of distributions $p_{t}^{\pi}$ and did not use the data about the field $v^{\pi}$ from FM. It might seem that knowing the vector field is redundant. However, when attempting to generalize the proof to the multivariate case ($D>1$), a limitation is revealed: the characteristic function for $\pi$ is recovered only at points $$(\xi_0,...,\xi_0,\xi_1,...,\xi_1) \in \mathbb{R}^D\times\mathbb{R}^D,$$ where each $\xi_0,\xi_1\in\mathbb{R}_{+}$ is repeated $D$ times. This subset, being embedded in a two-dimensional plane of the space $\mathbb{R}^D\times\mathbb{R}^D$, is not open, which does not guarantee a unique extension of the characteristic function to the entire space. An illustrative example of such ambiguity in recovering $\pi$ is the Gaussian case.

Let $p_0=\mathcal{N}(\mu_0,\Sigma_0)$ and $p_1=\mathcal{N}(\mu_1,\Sigma_1)$ be Gaussian distributions with means $\mu_0, \mu_1\in \mathbb{R}^{D}$ and covariances $0\prec\Sigma_0,\Sigma_1\in\mathbb{R}^{D\times D}$, respectively. Consider the plan $$\pi=\mathcal{N}\!\left(\!\left(\begin{smallmatrix}\mu_0 \\ \mu_1\end{smallmatrix}\right)\!, \left(\begin{smallmatrix}\Sigma_0 & S \\ S^\top & \Sigma_1\end{smallmatrix}\right)\!\right)\in \Pi(p_0,p_1),$$ where $S\in \mathbb{R}^{D\times D}$. 
If $(X_0,X_1)\sim\pi$ and $X_{t}:=(1-t)X_0+tX_1$, then the triple $(X_0,X_t,X_1)$ has a Gaussian distribution. Furthermore,
$\text{Law}(X_{t})=p_{t}^{\pi}=\mathcal{N}(\mu_{t},\Sigma_t),$ where $$\mu_{t}:=(1-t)\mu_0+t \mu_1\qquad\Sigma_{t}:=(1-t)^{2}\Sigma_0+t^{2}\Sigma_1+t(1-t)(S+S^{\top}).$$

\vspace{1mm}Note that the sequence $\{p_t^{\pi}\}$ is the same for all Gaussian plans $\pi$ that share the symmetric part of the matrix $S$, i.e., $\frac{1}{2}(S + S^{\top})$. In the one-dimensional case ($D\!=\!1$, $S\!\in\!\mathbb{R}^{1\times 1}$), we always have $S\!+\!S^{\top}\!=\!2S$, so from $p_{t}^{\pi}$ we uniquely obtain $S$ and the plan $\pi$. However, for $D>1$, one can construct Gaussian $\pi,\pi'\in \Pi(p_0,p_1)$, for which $$S\!+\!S^{\top}\!=\!S'\!+\!(S')^{\top},$$ but $S\neq S'$. From this observation, it follows that information about the sequence $p_{t}^{\pi}$ alone is insufficient to precisely find the original $\pi$. Note that the additional assumption of having information about $v^{\pi}$ changes the situation.

\begin{theorem}[On the Uniqueness of the Gaussian Solution to the Inverse FM Problem in the Multivariate Gaussian Case] Let $p_0=\mathcal{N}(\mu_0,\Sigma_0)$ and $p_1=\mathcal{N}(\mu_1,\Sigma_1)$ be two Gaussian distributions on $\mathbb{R}^{D}$ and let $\pi,\pi'\!\in\!\Pi(p_0,p_1)$ be Gaussian plans with cross-covariance components $S$ \& $S'$, respectively. 
If for $t\!=\!0$ we have $v_{0}^{\pi}\!=\!v_{0}^{\pi'}$, then it follows that $\pi\!=\!\pi'$.
\end{theorem}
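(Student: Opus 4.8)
The plan is to exploit the explicit affine form of the Gaussian velocity field at time $t=0$. Because $v_0^\pi$ is an affine function of $x_0$ whose linear part is exactly $S^\top\Sigma_0^{-1}-I$, and $\Sigma_0$ is fixed and invertible, the hypothesis $v_0^\pi=v_0^{\pi'}$ will pin down $S=S'$ immediately, after which equality of the two Gaussian plans is automatic.

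First, I would record that for the jointly Gaussian pair $(X_0,X_1)\sim\pi$ with the block covariance displayed above, $X_1-X_0$ is jointly Gaussian with $X_0$, with $\mathbb{E}[X_1-X_0]=\mu_1-\mu_0$ and $\mathrm{Cov}(X_1-X_0,X_0)=S^\top-\Sigma_0$. The Gaussian regression formula then gives
$$v_0^\pi(x_0)=\mathbb{E}\big[X_1-X_0\mid X_0=x_0\big]=(\mu_1-\mu_0)+\big(S^\top\Sigma_0^{-1}-I\big)(x_0-\mu_0),$$
an affine map defined on all of $\mathbb{R}^D$. The support caveat noted just before Theorem~1 is vacuous here, since $p_0^\pi=p_0=\mathcal{N}(\mu_0,\Sigma_0)$ has full support, so $v_0^\pi$ is unambiguous.

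Second, writing the analogous formula for $\pi'$, the equality $v_0^\pi=v_0^{\pi'}$ holds $p_0$-almost everywhere, hence everywhere by continuity of affine maps, which forces the linear parts to agree: $S^\top\Sigma_0^{-1}=(S')^\top\Sigma_0^{-1}$. Right-multiplying by the invertible matrix $\Sigma_0$ yields $S^\top=(S')^\top$, i.e. $S=S'$. Since a Gaussian plan in $\Pi(p_0,p_1)$ is completely determined by the fixed means $\mu_0,\mu_1$, the fixed marginal covariances $\Sigma_0,\Sigma_1$, and its cross-covariance block, $S=S'$ gives $\pi=\pi'$.

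The computation itself is routine; the only point that warrants care — and the closest thing to an obstacle — is justifying that $v_0$ is a legitimate object, i.e. that the field $v_t^\pi$, a priori defined for $t\in(0,1)$, extends continuously to $t=0$ with value $\mathbb{E}[X_1-X_0\mid X_0=\cdot]$. This is where Gaussianity is essential: $(X_0,X_t)$ is jointly Gaussian for every $t\in[0,1]$, and the conditional-mean coefficients are rational in $t$ and continuous at $t=0$ because $\Sigma_0\succ0$ keeps the relevant matrix ($\mathrm{Cov}(X_t)$, and $\Sigma_0$ itself at $t=0$) invertible in a neighbourhood of $0$. Hence the $t\downarrow0$ limit exists and equals the affine map above.
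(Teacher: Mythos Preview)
Your proof is correct and follows essentially the same approach as the paper: both compute the explicit affine form $v_0^\pi(x_0)=\mu_1+S^\top\Sigma_0^{-1}(x_0-\mu_0)-x_0$ via the Gaussian conditional-mean formula, equate the linear parts to obtain $S^\top=(S')^\top$, and conclude $\pi=\pi'$. Your additional remarks on full support of $p_0$ and on the well-definedness of $v_t^\pi$ at $t=0$ are sound but go beyond what the paper spells out.
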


\begin{proof}[Proof]
 Note that $v^{\pi}_{0}(x_0)\!=\!\mathbb{E}\big[X_1\!-\!X_0|X_0\!=\!x_0\big]\!=\!\mathbb{E}[X_{1}|X_{0}\!=\!x_0]-x_0=\mu_{1}+S^{\top}\Sigma_{0}^{-1}(x_{0}\!-\!\mu_{0})-x_0$.
Similarly, we get $v^{\pi'}_{0}(x_0)=\mu_{1}+(S')^{\top}\Sigma_{0}^{-1}(x_{0}\!-\!\mu_{0})-x_0$. Equating these linear functions in $x_0$, we notice that $S^{\top}=(S')^{\top}$, which implies $\pi=\pi'$.
\end{proof}

\vspace{-1mm}The theorem shows that in the Gaussian case, to solve the inverse problem, it is sufficient to know only the initial velocity $v_{0}^{\pi}$ — the intermediate values ${p_{t}^{\pi}}$ and $v_{t}^{\pi}$ for $t\!>\!0$ are not needed. However, this theorem considers only solutions $\pi$ belonging to the \textit{Gaussian} class. It is still unknown whether other solutions outside this class exist.

\vspace{1mm}
\noindent\textbf{Conclusion.} We have considered important cases of the inverse flow matching problem, but the general conditions for the uniqueness of the solution in the multivariate problem remain an open question. Its solution is important both for theory and for practice — for example, for rigorous justification of modern FM-based generative AI methods.

\makeatletter
\renewcommand{\@makefnmark}{}
\makeatother

\centering
\small
\bibliographystyle{plain}
\bibliography{Bibliography}

@inproceedings{esser2024scaling,
  title={Scaling rectified flow transformers for high-resolution image synthesis},
  author={Esser, Patrick and Kulal, Sumith and Blattmann, Andreas and Entezari, Rahim and M{\"u}ller, Jonas and Saini, Harry and Levi, Yam and Lorenz, Dominik and Sauer, Axel and Boesel, Frederic and others},
  booktitle={41st international conference on machine learning},
  year={2024}
}

@book{krantz2001function,
  title={Function theory of several complex variables},
  author={Krantz, Steven George},
  volume={340},
  year={2001},
  publisher={American Mathematical Soc.}
}

@inproceedings{liu2023flow,
  title={Flow Straight and Fast: Learning to Generate and Transfer Data with Rectified Flow},
  author={Liu, X. and others},
  booktitle={The Eleventh International Conference on Learning Representations},
  year={2023},
}

@article{huang2024flow,
  title={Flow generator matching},
  author={Huang, Zemin and Geng, Zhengyang and Luo, Weijian and Qi, Guo-jun},
  journal={arXiv preprint arXiv:2410.19310},
  year={2024}
}

\end{document}